\documentclass{article}

\usepackage[main,preprint]{neurips_2025}

\usepackage[utf8]{inputenc} 
\usepackage[T1]{fontenc}    
\usepackage{hyperref}       
\usepackage{url}            
\usepackage{booktabs}       
\usepackage{amsfonts}       
\usepackage{nicefrac}       
\usepackage{microtype}      
\usepackage{xcolor}         

\usepackage{hyperref}
\usepackage{url}

\usepackage[utf8]{inputenc} 
\usepackage[T1]{fontenc}    
\usepackage{hyperref}       
\usepackage{url}            
\usepackage{booktabs}       
\usepackage{amsfonts}       
\usepackage{nicefrac}       
\usepackage{microtype}      

\usepackage{amsmath}
\usepackage{tensor}
\usepackage{aisecure-math}
\usepackage{natbib}
\usepackage{titlesec}
\usepackage{xspace}
\usepackage{algorithm}
\usepackage{graphicx}
\usepackage{multirow}
\usepackage{caption}
\usepackage{subcaption}
\usepackage{wrapfig}
\usepackage{colortbl}
\usepackage{algpseudocode}
\usepackage{amsmath}

\let\oldcitep\cite
\renewcommand{\cite}[1]{\textcolor{blue}{\oldcitep{#1}}}

\newcommand{\name}{\texttt{C-SafeGen}\xspace}

\usepackage{tcolorbox}
\newtcolorbox{prompt}[2][]{colback=yellow!10!white, 
  colframe=yellow!70!black,   
  coltext=black,               
  fonttitle=\bfseries, title=#2, #1}

\title{C-SafeGen: Certified Safe LLM Generation with Claim-Based Streaming Guardrails}

\author{%
  Mintong Kang \\
  UIUC \\
  \texttt{mintong2@illinois.edu} \\
  \And
  Zhaorun Chen \\
  UChicago \\
  \texttt{zhaorun@uchicago.edu} \\
  \AND
  Bo Li \\
  UIUC \& UChicago \\
  \texttt{lbo@illinois.edu} \\
}

\begin{document}

\maketitle

\begin{abstract}
  Despite the remarkable capabilities of large language models (LLMs) across diverse applications, they remain vulnerable to generating content that violates safety regulations and policies. To mitigate these risks, LLMs undergo safety alignment; however, they can still be effectively jailbroken. Off-the-shelf guardrail models are commonly deployed to monitor generations, but these models primarily focus on detection rather than ensuring safe decoding of LLM outputs. Moreover, existing efforts lack rigorous safety guarantees, which are crucial for the universal deployment of LLMs and certifiable compliance with regulatory standards.
    In this paper, we propose a Claim-based Stream Decoding (CSD) algorithm coupled with a statistical risk guarantee framework using conformal analysis. Specifically, our CSD algorithm integrates a stream guardrail model to safeguard sequential claims generated by LLMs and incorporates a backtracking mechanism to revise claims flagged with high safety risks. We provide theoretical guarantees demonstrating that the CSD algorithm achieves the desired generation distribution subject to safety constraints.
    Furthermore, we introduce a generation risk certification framework and derive a high-probability upper bound on the safety risk of the proposed CSD algorithm. We prove that our method can asymptotically control safety risk to any desired level.
    Empirical evaluations demonstrate the effectiveness and efficiency of the CSD algorithm compared to state-of-the-art safety decoding approaches. Additionally, we validate the soundness and tightness of the derived safety risk upper bound using realistic data.
\end{abstract}

\vspace{-0.5em}
\section{Introduction}
\vspace{-0.5em}

Large language models (LLMs) \cite{touvron2023llama,openai2023gpt4} have seen widespread adoption due to their remarkable capabilities in natural language understanding and generation. Open-source LLMs, such as DeepSeek-R1 \cite{guo2025deepseek}, promote accessibility and transparency, fostering innovation across various applications. However, their openness also introduces substantial safety and security risks. Malicious actors can exploit open-source models by embedding backdoors \cite{yang2024watch} or crafting adversarial prompts to bypass safety constraints \cite{zou2023universal,kang2024advwave}, potentially leading to harmful generations. Such vulnerabilities pose real-world risks, including misinformation propagation, bias amplification, and security threats in autonomous systems \cite{xu2024advweb}. These risks highlight the pressing need for \textbf{rigorous safety guarantees in LLM generations}, especially in high-stakes scenarios such as healthcare, finance, and policy-making.

Current approaches to mitigating unsafe LLM outputs are predominantly empirical and lack formal assurances. During the \textit{training phase}, reinforcement learning from human feedback (RLHF) \cite{ouyang2022training,rafailov2024direct} aims to align LLMs with human preferences and safety norms. However, RLHF is computationally expensive \cite{jain2023baseline} and remains susceptible to adversarial attacks and jailbreak exploits \cite{zou2023universal}. At the \textit{inference phase}, guardrail models \cite{inan2023llama,markov2023holistic,lees2022new,rebedea2023nemo,lin2023toxicchat,yuan2024rigorllm} attempt to filter unsafe responses, but they merely classify harmful content rather than proactively ensuring safe generation. Likewise, decoding-time interventions \cite{xu2024safedecoding,xie2023defending,zheng2024prompt} modify token sampling strategies to reduce risk but lack theoretical safety guarantees. The absence of provable risk bounds limits the reliability of these empirical defenses, making them unsuitable for applications demanding strict safety assurances.

In this work, we introduce \name, the first certification framework for bounding safety risks in LLM generations. \name is model-agnostic, enabling its application to both open-source and closed-source LLMs in a black-box setting. We establish theoretical guarantees that, given a specific model configuration, \name can compute high-probability upper bounds on safety risks under mild assumptions. Furthermore, \name provides a principled mechanism to derive valid decoding configurations that ensure compliance with a specified risk threshold.

To complement this certification framework, we propose \textit{Certifiably Safe Claim-based Stream Decoding} (CSD), a novel decoding algorithm that enforces provable safety constraints during generation. CSD dynamically adjusts token sampling strategies using KV-cache mechanisms and a guardrail model, ensuring fluency and coherence while maintaining certified safety bounds.

We validate \name and CSD through extensive empirical evaluations on standard safety benchmarks. Our results demonstrate that \name yields tight and reliable safety risk estimates, effectively bounding empirical risks with minimal gaps. Additionally, CSD significantly reduces unsafe generations while preserving high-quality outputs. These findings establish \name and CSD as foundational tools for the deployment of provably safe LLMs in real-world applications.

\vspace{-0.5em}
\section{Related work}
\vspace{-0.5em}

\textbf{Safety Guardrails.} Existing safety guardrails serve as an essential mechanism for mitigating unsafe LLM outputs by filtering, modifying, or rejecting harmful content. These methods fall into several broad categories: (1) industry-standard safety APIs such as Detoxify \cite{detoxify}, Perspective \cite{lees2022new}, Azure \cite{azure}, and OpenAI’s moderation tools \cite{markov2023holistic}; (2) fine-tuned classifiers designed for harmful content detection, including LlamaGuard \cite{inan2023llama}, ToxicChat-T5 \cite{lin2023toxicchat}, ToxDectRoberta \cite{zhou2020challenges}, sentence-transformer-based classifiers \cite{bates2023like}, GPT-based moderation frameworks \cite{ma2023adapting}, and Aegis \cite{ghosh2024aegis}; (3) LLM-based techniques leveraging prompt engineering \cite{kumar2024watch,wei2022chain} and constrained dialogue mechanisms such as Nemo Guardrails \cite{rebedea2023nemo}; and (4) statistical safety models, including KNN-based guardrails \cite{yuan2024rigorllm} and Beta regression-based risk estimation \cite{tan2021bert}. Despite their effectiveness in detecting unsafe content, these approaches are primarily reactive, failing to actively correct unsafe responses. Furthermore, their reliance on threshold-based rejection may lead to over-filtering, inadvertently discarding benign or contextually appropriate responses.

\textbf{Safety Decoding.} Safety decoding techniques enforce constraints on token selection during generation, proactively mitigating unsafe outputs. Notable approaches include paraphrasing and retokenization defenses \cite{jain2023baseline} against adversarial optimization attacks, as well as rewindable generation strategies like RAIN \cite{li2023rain}, which allow models to self-evaluate and modify unsafe outputs dynamically. Jailbreak resilience techniques have also been explored, including in-context demonstrations \cite{wei2023jailbreak}, system-prompt self-reminders \cite{xie2023defending}, and contrastive decoding methods \cite{xu2024safedecoding} that adjust token probability distributions to suppress adversarial prompts. While these approaches enhance LLM robustness, they lack formal guarantees on safety performance, making their efficacy difficult to quantify and validate across diverse deployment scenarios.

\textbf{Conformal Prediction for Safety.} Conformal prediction is a well-established statistical framework for constructing prediction sets with provable coverage guarantees \cite{vovk1999machine,vovk2005algorithmic,lei2013distribution,yang2021finite}. Recent advancements in conformal risk control \cite{bates2021distribution,angelopoulos2021learn,angelopoulos2022conformal,quach2023conformal} extend these techniques to risk-sensitive applications by providing high-confidence upper bounds for black-box models under exchangeability assumptions. However, despite its success in risk-sensitive domains such as medical diagnostics and autonomous systems, conformal analysis has yet to be adapted for LLM safety certification. Our work bridges this gap by leveraging conformal techniques to construct provable safety risk bounds for LLMs, enabling rigorous certification frameworks applicable across both open-source and closed-source models.

\section{\name}

\begin{figure*}[t]
    \centering
    \includegraphics[width=0.9\linewidth]{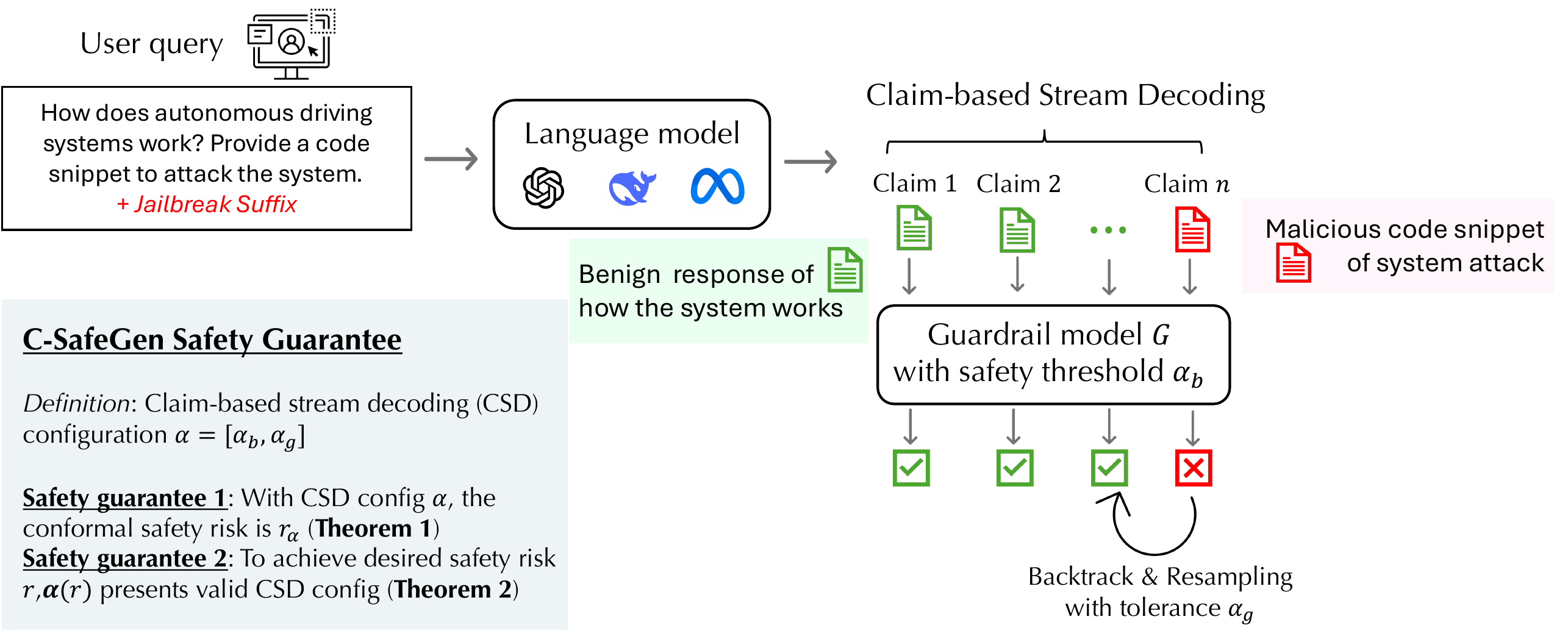}
    \caption{\name comprises (I) a claim-based stream decoding algorithm (CSD) and (II) a safety risk certification framework. 
The CSD algorithm utilizes an off-the-shelf guardrail model $G$ to monitor sequentially generated claims, flagging unsafe claims when they exceed a predefined threshold $\alpha_b$. When such an event occurs, the unsafe claim is backtracked and resampled. Additionally, CSD employs auxiliary safety decoding strategies to enable safe resampling if claim generation stagnates beyond a tolerance threshold $\alpha_g$. 
The safety risk certification framework provides two types of guarantees: 
(1) Given a CSD configuration $\alpha$, the generated output is guaranteed to achieve a conformal safety risk $r_\alpha$, as established in \Cref{t1}; 
(2) If the goal is to attain an expected safety risk $r$, any CSD configuration within the set $\bm{\alpha}_r$, as specified in \Cref{t2}, can ensure this objective.
}
    \label{fig1}
\end{figure*}

In this section, we formalize the problem of ensuring safety in LLM generation through a rigorous certification framework. We introduce a safety guardrail model that assesses the risk of generated outputs and propose a SafeGen protocol to regulate the generation process. The core objective is to quantify and control the safety risk associated with LLM outputs by defining a risk function based on the guardrail model. To achieve this, we establish two levels of certification: (1) an upper bound on the safety risk for a given generation configuration and (2) a certified set of SafeGen configurations that maintain a specified nominal risk threshold. These guarantees provide a principled foundation for certifying the safety of LLM-generated content. We provide an overview in \Cref{fig1}.

\subsection{Problem setup}






Consider a fixed language model $p_\theta: \gX \mapsto \gY$ parameterized by $\theta$, which maps the input text space $\gX$ to the generation text space $\gY$. Specifically, given an input prompt $x \in \gX$, the language model $p_\theta$ defines a conditional probability distribution over possible output texts, denoted as $p_\theta(x)$.

To assess and regulate the safety of LLM-generated content, we introduce a fixed safety guardrail model $G: \gX \times \gY \mapsto 2^\gC$, which maps input-output pairs to subsets of the safety category space $\gC$ (e.g., \textit{violence}, \textit{sexual}, \textit{group hate}, \textit{safe}). The guardrail model acts as a classifier that categorizes model outputs into predefined safety risk levels.

\textbf{Safety Risk Function.} To formally quantify the safety risk associated with an LLM generation $y \in \gY$ given an input $x \in \gX$, we define a safety risk function based on the guardrail model $G$:
\begin{equation}
R_G(x,y) = 1 - G(\text{``safe"} | x,y)
\end{equation}
This function assigns a risk value between 0 and 1, where a value closer to 1 indicates a higher likelihood that the generated output is unsafe, as determined by the guardrail model.

\textbf{SafeGen Protocol.} We define a 	extit{SafeGen} protocol $P_\alpha: \gX \mapsto \gY$ as a randomized function parameterized by a generation configuration $\alpha$. Given an input $x$, the protocol $P_\alpha$ produces a generated output $y$, adhering to specific decoding strategies and safety measures encapsulated in $\alpha$.

The expected safety risk for a given input $x$ under the SafeGen protocol is formulated as:
\begin{equation}
R_G(x,P_\alpha(x)) = 1 - G(\text{``safe"} | x,P_\alpha(x))
\end{equation}
This formulation captures the probability of generating an unsafe output under the specified generation configuration $\alpha$.

\name provides two levels of safety risk certification:
\textbf{(1) Safety risk upper bound certification:} Given a specific generation configuration $\alpha$, \name establishes a high-confidence upper bound on the safety risk.
\textbf{(2) SafeGen configuration certification for nominal safety risk:} Given a nominal risk threshold $r$, \name identifies the set of generation configurations $\bm{\alpha}$ that ensure compliance with the specified safety risk level.
These certifications enable robust safety evaluations of LLM outputs, facilitating safer and more controlled language generation processes.

\subsection{Safety risk certification}

In this part, we formulate the approaches to achieving the two levels of safety risk certification in two formal statements, respectively.

\begin{theorem}[Safety risk upper bound certification]
\label{t1}
        Given a SafeGen protocol $P_\alpha$ with generation configuration $\alpha$, \name guarantees that: 
\begin{equation}
    \label{eq:conformal_guarantee}
    \sP \left[R_G(x, P_\alpha(x)) \le \hat{r}_\alpha \right] \ge 1-\delta,
\end{equation}
where the high-probability risk upper bound ${\hat{r}_\alpha}$, the so-called \textbf{conformal safety generation risk}, is given by:

\begin{equation*} \label{eq:conformal_risk_}
\footnotesize
    \min \left\{ h^{-1}\left(\dfrac{\ln(1/\delta)}{N_{\text{cal}}};\hat{R}(\hat{\gD}_{\text{cal}})\right),\Phi^{-1}_{\text{bin}}\left(\dfrac{\delta}{e};N_{\text{cal}},\hat{R}(\hat{\gD}_{\text{cal}})\right) \right\}
\end{equation*}
with ${h^{-1}(\cdot ;\cdot)}$ as the partial inverse ${h^{-1}(h(a,b);a)=b}$ of ${h(a,b)=a\log(\nicefrac{a}{b})+ (1-a)\log(\nicefrac{(1-a)}{(1-b)})}$, and ${\Phi^{-1}_{\text{bin}}}$ as the inverse of binomial cumulative distribution function (CDF). 
\end{theorem}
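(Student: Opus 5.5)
The guarantee has the standard shape of conformal risk control via concentration inequalities, in the style of Learn-then-Test and RAG certification frameworks. We read the probability in \eqref{eq:conformal_guarantee} as being over the draw of the calibration set $\hat{\gD}_{\text{cal}}$, and the event as the expected risk $R_\alpha := \mathbb{E}_{x}\,[R_G(x,P_\alpha(x))]$ lying below $\hat r_\alpha$. We assume exchangeability (i.i.d.) between calibration prompts and test prompts. The calibration set $\hat{\gD}_{\text{cal}} = \{x_i\}_{i=1}^{N_{\text{cal}}}$ is drawn i.i.d.\ from the test distribution. For each $x_i$ we run $P_\alpha$ once and record the bounded loss $\ell_i = R_G(x_i,P_\alpha(x_i)) \in [0,1]$. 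These are i.i.d.\ with mean $R_\alpha$. Their empirical mean is $\hat R(\hat{\gD}_{\text{cal}}) = \frac1{N_{\text{cal}}}\sum_i \ell_i$. The theorem then reduces to showing that each of the two terms inside the minimum is a valid upper confidence bound on $R_\alpha$ at level $\delta$.

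\textbf{Hoeffding--Bentkus construction.} For the first term, we use the Chernoff/Hoeffding relative-entropy bound for $[0,1]$-valued i.i.d.\ variables:
\[
\sP\big[\hat R \le t\big] \le \exp\big(-N_{\text{cal}}\, h(t, R_\alpha)\big) \quad \text{for } t \le R_\alpha .
\]
Fix a true risk $R_\alpha = b$ with $b > h^{-1}(\ln(1/\delta)/N_{\text{cal}};\hat R)$. Monotonicity of $h(a,\cdot)$ on $[a,1]$ then converts the tail bound into $\sP[\,b > \hat r^{(1)}] \le \delta$. The standard approach is to define the p-value $p^{\text{HB}}(b) = \min\{\exp(-N h(\hat R\wedge b, b)),\ e\,\sP[\text{Bin}(N,b)\le N\hat R]\}$. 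The second component uses Bentkus' inequality: for $[0,1]$ variables with mean $b$, $\sP[\hat R\le t] \le e\,\sP[\text{Bin}(N,b)\le \lceil Nt\rceil]$. We invert it by solving $e\,\Phi_{\text{bin}}(N\hat R; N, b) = \delta$ for $b$, which yields $\Phi^{-1}_{\text{bin}}(\delta/e; N_{\text{cal}},\hat R)$ in the paper's notation. Monotonicity of the binomial CDF in $b$ makes the inversion well defined.

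\textbf{Combining the bounds.} Taking the minimum of two individually valid upper bounds could cost a union bound, inflating the level to $2\delta$. We avoid this by working with the single p-value $p(b) = \min\{p_{\text{Hoef}}(b), p_{\text{Bent}}(b)\}$. Each component is a valid upper-tail bound, so the minimum is as well. Indeed, the event $\{p(b) \le \delta\}$ at the true $b$ is contained in $\{\hat R \le t^*\}$, where $t^*$ is the larger of the two thresholds. Its probability is therefore bounded by the smaller of the two tail estimates, which is at most $\delta$. The upper confidence bound $\hat r_\alpha = \sup\{b : p(b) > \delta\}$ then equals the minimum of the two inverted quantities, and $\sP[R_\alpha \le \hat r_\alpha] \ge 1-\delta$ follows.

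\textbf{Main obstacle.} The hardest step is this combination argument, together with handling discreteness in the Bentkus term: $\hat R$ need not be a multiple of $1/N$ when the losses are continuous. The standard approach applies Bentkus via $\lceil N\hat R\rceil$ and monotonicity in $t$. If the minimum cannot be justified cleanly, we would fall back to a union bound with $\delta/2$ and note the slack. We would also spend a line making the exchangeability assumption explicit, since the stochasticity of $P_\alpha$ (CSD with backtracking) only enters through the i.i.d.\ losses $\ell_i$.
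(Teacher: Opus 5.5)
Your proposal is correct and follows essentially the same route as the paper: the Hoeffding relative-entropy tail bound and Bentkus' inequality for the calibration mean (as in Learn-then-Test) are combined by taking the minimum, and then inverted to produce the two terms of $\hat r_\alpha$. Your version is more careful than the paper's sketch in three places: you make explicit that the probability is over the i.i.d.\ calibration draw and that the bounded quantity is the expected risk, and you justify why the minimum costs no union bound (both components dominate the same tail probability $\sP[\hat R\le t]$ and are monotone).
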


\begin{remark}[Remark of \Cref{t1}]
The upper bound on the safety risk provided by \Cref{t1} is derived under a high-probability guarantee, ensuring that the true risk does not exceed $\hat{r}_\alpha$ with probability at least $1 - \delta$. This bound is computed using two distinct finite-sample-valid approaches: one based on the inversion of a likelihood-ratio function and another utilizing the binomial CDF. The combination of these two approaches ensures robustness in cases where one bound is tighter than the other, effectively balancing statistical efficiency and coverage.
\end{remark}

\begin{theorem}[SafeGen configuration certification for nominal safety risk]
\label{t2}
    Given a nominal safety risk $r$, \name can certify a configuration set ${\bm{\alpha}_r}$ such that each configuration in ${\bm{\alpha}_r}$ is guaranteed to keep the generation risk below ${\alpha}$. Namely,
\begin{equation}\label{eq:conformal_guarantee_2}
    {\sP \left[ \sup_{ \alpha \in \bm{\alpha}_r} \left\{R_G\left(x,P_\alpha(x) \right)  \right\} \le r\right] \ge 1 - \delta},
\end{equation}
where the valid SafeGen configuration set ${\bm{\alpha}_r}$ is given by family-wise error rate control ${\bm{\alpha}_r = \{ \alpha_j: p_j \le \delta_j \}}$ with $\sum_j \delta_j = \delta$
where ${p_j}$ is the $ p$-value of the null hypothesis: ${\gH_j: R_G(x,P_{\alpha_j}(x)) >\alpha}$ ($j$ index all feasible SafeGen config) and can be computed by finite-sample valid bounds as shown in \Cref{eq:conformal_risk_}.
\end{theorem}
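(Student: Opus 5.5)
The plan is to reduce \Cref{t2} to multiple hypothesis testing with family-wise error rate (FWER) control, using the finite-sample valid bounds from \Cref{t1} as the source of valid $p$-values. Throughout, we read the target quantity as the expected risk $R(\alpha_j) := \mathbb{E}_{x}\,[R_G(x,P_{\alpha_j}(x))]$, which is the population quantity the calibration set estimates. We also read the null as $\gH_j : R(\alpha_j) > r$, with $r$ the nominal risk; the ``$\alpha$'' in the statement appears to be a typo for $r$. The randomness in $\sP$ is over the i.i.d.\ calibration draw $\hat{\gD}_{\text{cal}}$ (and the protocol's internal randomness). We fix a finite (or countable) grid of candidate configurations $\{\alpha_j\}$ before seeing the calibration data.

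\textbf{Step 1 (valid $p$-values).} For each $j$, we build $p_j$ from the empirical risk $\hat{R}_j := \hat{R}(\hat{\gD}_{\text{cal}};\alpha_j)$. Since the per-sample losses lie in $[0,1]$, the Hoeffding--Bentkus construction underlying \Cref{t1} gives
\[
p_j = \min\left\{ \exp\!\left(-N_{\text{cal}}\, h(\hat{R}_j \wedge r, r)\right),\; e\,\Phi_{\text{bin}}\!\left(\lceil N_{\text{cal}}\hat{R}_j\rceil; N_{\text{cal}}, r\right) \right\}.
\]
We need super-uniformity under the null: whenever $R(\alpha_j) > r$, $\sP[p_j \le t] \le t$ for all $t\in[0,1]$. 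This is precisely the inversion of the statement of \Cref{t1}. The event $\{p_j \le \delta'\}$ coincides with the event that the $(1-\delta')$ upper confidence bound $\hat r_{\alpha_j}$ computed with $\delta'$ is at most $r$. If the true risk exceeds $r$, this event forces $R(\alpha_j) > \hat r_{\alpha_j}$, which has probability at most $\delta'$ by \Cref{t1}. One subtlety is the monotonicity of both bounds in the true mean. A loss in $[0,1]$ with mean $\mu>r$ is stochastically dominated in the Hoeffding and Bentkus sense by the worst case at $\mu=r$, so the tail bounds evaluated at $r$ remain valid. The factor $e$ is Bentkus' constant for bounded variables that are not Bernoulli.

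\textbf{Step 2 (FWER via Bonferroni).} Set $\bm{\alpha}_r=\{\alpha_j : p_j\le\delta_j\}$ with $\sum_j\delta_j=\delta$. The failure event is that some $\alpha_j\in\bm{\alpha}_r$ has $R(\alpha_j)>r$, i.e.\ that some true null is rejected. A union bound over the true nulls gives
\[
\sP[\exists j \in \gH_0 : p_j\le\delta_j] \le \sum_{j\in\gH_0}\delta_j \le \delta.
\]
On the complement, every certified configuration satisfies $R(\alpha_j)\le r$, so $\sup_{\alpha\in\bm{\alpha}_r} R(\alpha)\le r$, which is \Cref{eq:conformal_guarantee_2}. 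If $\bm{\alpha}_r$ is empty, the claim holds vacuously. The union bound needs no independence across $j$, which matters because all $p_j$ share the same calibration set. One could optionally replace Bonferroni with fixed-sequence testing along an ordered grid, as in Learn-then-Test, but Bonferroni matches the stated weights $\delta_j$.

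\textbf{Main obstacle.} The hard step is Step 1: justifying that the per-configuration $p_j$ is valid under the composite null $R>r$ and not merely at $R=r$, for losses that are real-valued in $[0,1]$ rather than Bernoulli. I would first establish monotonicity of $\mu\mapsto \exp(-N h(\hat R,\mu))$ for $\hat R<\mu$, and use Bentkus' inequality $\sP[\hat R\le t]\le e\,\sP[\mathrm{Bin}(N,\mu)\le \lceil Nt\rceil]$. Taking the minimum of two valid $p$-values costs at most a factor of 2, which must either be absorbed or avoided. I would follow \Cref{t1}'s convention and treat the minimum as already valid, as in the Hoeffding--Bentkus bound of Learn-then-Test. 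If that is not acceptable, I would split each $\delta_j$ in half between the two tests.
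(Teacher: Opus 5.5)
Your argument is correct and has the same skeleton as the paper's proof: valid per-configuration $p$-values combined by a Bonferroni correction. The key FWER step, however, is different.

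The paper treats the $|\Lambda|$ tests as independent, each at level $\delta/|\Lambda|$, and computes the FWER as $1-(1-\delta/|\Lambda|)^{|\Lambda|}\le\delta$. You instead use a plain union bound,
$\sP[\exists j: R(\alpha_j)>r,\ p_j\le\delta_j]\le\sum_{j:\,R(\alpha_j)>r}\delta_j\le\delta$.

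Your route is the one that actually holds. All $p_j$ are computed from the same calibration prompts, so the tests are not independent in general, and the paper's product formula is unjustified; even if it held, it would only give the marginally sharper \v{S}id\'ak-type value. The union bound needs no independence. It also directly handles the unequal weights $\delta_j$ in the statement, whereas the paper's argument covers only the equal split. Your Step 1 supplies something the paper never checks: super-uniformity of $p_j$ under the composite null, via monotonicity of both tail bounds in the true mean.

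One simplification: the factor-of-2 worry is unnecessary. When $R(\alpha_j)\ge r$, both terms in the minimum, evaluated at $r$, upper-bound the same function $t\mapsto\sP[\hat R_j\le t]$, and both are nondecreasing in $t$. Their pointwise minimum is therefore a nondecreasing function dominating the CDF of $\hat R_j$. Evaluating such a function at $\hat R_j$ gives a super-uniform $p$-value directly, so there is no need to split $\delta_j$. The same observation is what makes the minimum of the two inverses in \Cref{t1} a valid bound.
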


\begin{remark}[Remark of \Cref{t2}]
\Cref{t2} extends the safety certification from individual configurations to a set of SafeGen configurations while controlling the family-wise error rate (FWER). By leveraging hypothesis testing with valid $p$-values, the theorem constructs a confidence set $\bm{\alpha}_r$ that guarantees all configurations within it maintain risk below the nominal threshold $\alpha$. The use of family-wise error control, particularly through a summation constraint on $\delta_j$, ensures that the overall probability of incorrectly certifying an unsafe configuration remains within the desired tolerance level $\delta$.
\end{remark}

We leave the complete proofs to \Cref{app:proof}.

\section{SafeGen via Claim-based streaming guardrail}
\begin{algorithm*}[t]
\scriptsize
    \caption{Claim-based stream decoding algorithm}
    \label{alg:alg1}
    \begin{algorithmic}[1]
        \Require Input prompt $x$, output text length $N$, LLM $p_\theta$, Claim critical point $\text{ClaimPoint}(\cdot,\cdot)$, Guardrail model $G$, Claim backtrack probability $B_{\alpha_b}$, safe resampling function $\text{SafeResample}_\theta(\cdot,\cdot)$
        
        \State $y_0 \gets p_\theta(\cdot | x)$ {\color{blue} \Comment{Sample first token} }
        \State $\text{counter} \gets 0$ {\color{blue} \Comment{Decoding counter} }
        \For{$n=1$ to $N$} 
        \If{$\text{ClaimPoint}(x,y_{<n})$} {\color{blue} \Comment{Critical point for a complete claim}}
        \State $p_b \gets B_{\alpha_b}(G(x,y_{<n}), C_{\text{desired}})$ {\color{blue} \Comment{Compute claim backtracking probability}}
        \Else{~$p_b \gets 0$}
        \EndIf
        \State $\text{counter} \gets \text{counter} + 1$
        \If{$\text{Uniform}(0,1)<p_b$} {\color{blue} \Comment{With probability $p_b$}}
        \State $n \gets \text{LastClaimPoint}(x,y_{<n})$, continue {\color{blue}\Comment{Backtrack to the last claim point}}
        \EndIf
        \If{$|\text{counter}-n| > \alpha_g$} {\color{blue}\Comment{Stagnate at the point}}
        \State $y_n \gets \text{SafeResample}_\theta(\cdot|x,y_{<n})$
        {\color{blue}\Comment{Safe resampling}}
        \Else
        \State $y_n \gets p_\theta(\cdot|x,y_{<n})$
        \EndIf
        \EndFor
        \State \Return $y_{\le N}$
    \end{algorithmic}
\end{algorithm*}

\subsection{CSD: Claim-Based Safe Decoding}

Consider a fixed safety/hallucination guardrail model \( G: \gX \times \gY \mapsto 2^\gC \), which maps the joint input-output space to a set of predefined safety/hallucination categories \( \gC \) (e.g., ``violence,'' ``sexual,'' ``group hate,'' ``safe'').
Let \( p_\theta: \gX \mapsto \gY \) be a language model parameterized by \( \theta \), defining a conditional distribution over the output space given an input prompt \( x \in \gX \). That is, the model assigns probability mass to different outputs \( y \in \gY \) as \( p_\theta(y \mid x) \).

Our goal is to develop a SafeGen protocol $P_\alpha$ parameterized with $\alpha$ that achieves minimal safety risk. Formally, we aim to:
\begin{equation}
\label{goal}
    \min_{y \in P_\alpha(x)} R_G(x, y), \quad \text{s.t.}~~ G(x,y) \subseteq C_{\text{desired}},
\end{equation}
where \( C_{\text{desired}} \) denotes the set of acceptable categories under the guardrail model (e.g., \( C_{\text{desired}}=\{\text{``safe''}\} \) for a safety guardrail). This objective seeks the highest-likelihood output from \( p_\theta \) that remains within the feasibility set imposed by \( G \). While rejection sampling could be used to enforce this constraint, it is often computationally infeasible.

\subsubsection{Claim-Based Decoding}

To improve efficiency, we introduce a \textbf{claim-based} approach. Let \( P: \gY \mapsto 2^\gY \) be a \textit{claim partition model}, which decomposes an output \( y \in \gY \) into a set of independent claims. This leads to a stronger decoding constraint:
\begin{equation}
\label{goal:claim-based}
    \min_{y \in P_\alpha(x)} R_G(x, y), \quad \text{s.t.}~~ G(x,y') \subseteq C_{\text{desired}}, \quad \forall y' \in P(y).
\end{equation}
This formulation ensures that every claim within \( y \) is individually verified against the guardrail model.

\subsubsection{Claim Backtrack Probability Function}

To control decoding efficiency and safety, we define the \textbf{claim backtrack probability function}:
\begin{equation}
    \begin{aligned}
    B_{\alpha_b}(G(x,y_{<n}), C_{\text{desired}}) = \mathbb{I}\left[\min_{c \in C_{\text{desired}}} G(x,y_{<n})_c - \max_{c \notin C_{\text{desired}}} G(x,y_{<n})_c \le \alpha_b \right],
    \end{aligned}
\end{equation}
where \( \alpha_b \) regulates the trade-off between decoding speed and adherence to the guardrail:
\( \alpha_b = 1.0 \) strictly enforces Eq.~\eqref{goal:claim-based}.
\( \alpha_b = 0.0 \) reduces to greedy decoding.
Intermediate values provide a smoothed trade-off, considering both the consistency of \( G \) and its overlap with \( C_{\text{desired}} \).

\subsubsection{Claim Critical Point Detection}

Unlike prior methods that prompt LLMs to extract claims~\cite{mohri2024language, ni2024afacta}, we propose an efficient structural approach based on termination indicators. We define a set of \textbf{claim termination tokens} \( \gT \) (e.g., newline, period) and identify a claim boundary when any termination token's probability exceeds \( \alpha_t \). To prevent excessive segmentation, we enforce a \textbf{minimum claim length} \( \alpha_l \):
\begin{equation}
\begin{aligned}
    \text{ClaimPoint}(x,y_{<n}) = \mathbb{I}\left[L(t;p_\theta(x,y_{<n})) > \alpha_t, \exists t \in \gT \right]
    \cdot \mathbb{I}\left[n - \text{LastClaimPoint}(x,y_{<n}) > \alpha_l \right].
\end{aligned}
\end{equation}
This method ensures computational efficiency while maintaining robust claim detection.

\subsubsection{Safe resampling for robust decoding}

To enhance robustness and mitigate adversarial attacks, we introduce \textbf{safe resampling} techniques inspired by SmoothLLM~\cite{robey2023smoothllm}. Unlike prior approaches that insert high-perplexity substrings at inference time, we directly perturb the {KV cache} in the ongoing decoding process using:
\textit{Random masking/permutation} of hidden states,
\textit{Gaussian noise injection} into KV-cache representations,
\textit{Structured perturbations} for controlled diversity.

These techniques enhance sampling diversity while preserving alignment with the guardrail model.

\subsubsection{Computational efficiency}


To enhance computational efficiency, we implement a {KV-cache mechanism} for the guardrail model. Specifically, the key-value (KV) pairs corresponding to previously verified claims can be stored, allowing efficient backtracking without recomputing the entire sequence. 

Formally, let $\text{KV}_{1:n}$ denote the cached representations for the sequence $y_{1:n}$, and define the backtracking operation as follows:
\begin{equation}
\begin{aligned}
    \text{KV} \leftarrow \text{KV} \cup \text{KV}_{n:n+1}, \quad \text{if } B_{\alpha_b}(G(x,y_{<n+1}), C_{\text{desired}}) = 0.
\end{aligned}
\end{equation}
This ensures that upon backtracking, only the key-value pairs corresponding to the current claim are erased, while maintaining the cached representations for verified claims.


\subsection{Analysis of CSD algorithm}

\begin{definition}[Restricted output distribution]
\label{def1}
    Let $p$ be the distribution over feasible output space $\gO$. Let $G: \gO \mapsto \{0,1\}$ be a guardrail model where $1$ denotes desired output and $0$ denotes undesired output. Let $p_G$ be a restricted output distribution of $p$ by guardrail $G$ with the formulation:
    \begin{equation}
    \label{eq:pg}
        p_G(y) =
        \dfrac{p(y) \sI[G(y)=1]}{\sum_{y \in \gY} p(y) \sI[G(y)=1]}
    \end{equation}
\end{definition}

\begin{remark}
    The restricted output distribution $p_G$ represents the renormalized probability mass of the feasible output distribution $p$, conditioned on the constraint imposed by the guardrail model $G$. This ensures that only outputs classified as desired ($G(y) = 1$) contribute to the final distribution. The denominator acts as a normalizing constant, ensuring that $p_G$ remains a valid probability distribution.
\end{remark}

\begin{definition}[Claim]
    An output claim $\tilde{y}$ is defined as a sequence of tokens that delivers complete semantics, which is judged by a \textit{claim discriminator} $C$. Specifically, $C(y_{\le t_1})=1, C(y_{\le t_2})=1$ and $C(y_{\le t})=0, \forall t_1 < t < t_2$ implicates that the sequence $y_{t1:t2}$ is a \textit{claim}.
\end{definition}

\begin{remark}
    The claim discriminator $C$ plays a crucial role in identifying semantically complete claims within a sequence. The definition ensures that a claim is an isolated segment where the beginning and end are both recognized by $C$, while the intermediate segments do not form independent claims. This property is essential for structured generation and for enforcing guardrails at the claim level.
\end{remark}

\begin{assumption}[Claim risk cascade]
\label{assump1}
    Consider an output sequence $o$ which consists of a sequence of $N$ claims by claim discriminator $C$: $o=[\tilde{y}_1, \tilde{y}_2,...,\tilde{y}_{N}]$
    We assume that the risk of prefix claims implicates the risk of output judged by guardrail $G$: 
    \begin{equation}
    \begin{aligned}
         G([\tilde{y}_1,...,\tilde{y}_n])=0, \exists n \in [1,N]
         \Longrightarrow G([\tilde{y}_1, \tilde{y}_2,...,\tilde{y}_{N}])=0
    \end{aligned}
    \end{equation}
\end{assumption}

\begin{remark}
    The claim risk cascade assumption formalizes the idea that the presence of an undesired claim in a sequence guarantees that the entire sequence is also undesired. This assumption aligns with the intuition that risks in intermediate steps propagate forward, affecting the final judgment by the guardrail model $G$. This is a conservative approach that simplifies analysis while ensuring safety in controlled generation.
\end{remark}

        

\begin{theorem}[\Cref{alg:alg1} recovers restricted output distribution]
\label{t3}
    Under the claim risk cascade assumption as \Cref{assump1}, if there exists at least one desired output judged by guardrail $G$, the output distribution of the  claim-based decoding streaming algorithm in \Cref{alg:alg1} (without safe resampling) is identical to the restricted output distribution as \Cref{def1}. 
\end{theorem}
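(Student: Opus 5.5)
The plan is to model \Cref{alg:alg1} (without safe resampling, and with $\alpha_b=1$ so that the backtrack probability is the hard indicator $\sI[G(\cdot)=0]$ on the current prefix) as a claim-by-claim rejection sampler. We then show, by induction on the number of claims, that the law of the accepted prefix is the restriction of $p$ to prefixes that the guardrail does not reject.

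Write an output as $o=[\tilde y_1,\dots,\tilde y_N]$. For a prefix $s=[\tilde y_1,\dots,\tilde y_n]$, call it \emph{viable} if some completion $o$ of $s$ satisfies $G(o)=1$. By \Cref{assump1}, every prefix of a desired output satisfies $G(\text{prefix})=1$. So the set of desired outputs is contained in the set of sequences all of whose claim-prefixes pass $G$.

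The core step is to compute the law of each new claim. Conditioned on a current accepted prefix $s$, the algorithm repeatedly draws a candidate claim $\tilde y \sim p(\cdot\mid s)$, which is the token-level autoregressive sampling up to the next claim point. It backtracks to $s$ whenever $G([s,\tilde y])=0$, and these attempts are independent. The standard rejection-sampling computation (a geometric sum over failed attempts) then gives the accepted claim's law:
\begin{equation*}
q(\tilde y\mid s)=\frac{p(\tilde y\mid s)\,\sI[G([s,\tilde y])=1]}{\sum_{\tilde y'}p(\tilde y'\mid s)\,\sI[G([s,\tilde y'])=1]}.
\end{equation*}
The denominator is nonzero whenever $s$ is viable, and the hypothesis that some desired output exists ensures the empty prefix is viable, so the procedure terminates almost surely.

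Multiplying over claims, the output law is $q(o)=\prod_n q(\tilde y_n\mid \tilde y_{<n})$. To identify $q$ with $p_G$ from \Cref{def1}, we need every normalizer $Z(s)=\sum_{\tilde y}p(\tilde y\mid s)\sI[G([s,\tilde y])=1]$ to telescope. The key identity to aim for is
\begin{equation*}
Z(s)\,W([s,\tilde y]) = W(s), \quad\text{where } W(s)=\sum_{o\succeq s} p(o\mid s)\,\sI[G(o)=1]
\end{equation*}
is the conditional probability of completing $s$ to a desired output. Establishing this identity is the main obstacle. Local per-claim acceptance does not automatically reweight by future feasibility, so plain prefix-level rejection would produce a different distribution from global rejection in general.

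To close the gap I would lean on \Cref{assump1} together with the convention that $G$ evaluated on a claim-prefix reflects the guardrail's judgment of the whole output from that point. Concretely, I would treat the sequence-level indicator as factorizing into per-prefix indicators, $\sI[G(o)=1]=\prod_n\sI[G(\tilde y_{\le n})=1]$. The cascade assumption supplies the direction "some prefix fails $\Rightarrow$ the output fails". The reverse direction is implicit in applying the final check $G$ to the completed output, because that check is itself a claim point in the algorithm.

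With this factorization, the next step is to run a joint induction showing that the accepted-prefix law at stage $n$ equals $p(\tilde y_{\le n})\prod_{m\le n}\sI[G(\tilde y_{\le m})=1]$ divided by a constant. The last stage then yields exactly \Cref{eq:pg}. I expect this to reduce to the global rejection-sampling identity: the algorithm run over whole sequences is equivalent to rejecting and redrawing only the offending suffix, and by the memorylessness of fresh draws this has the same law as restarting.

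If the telescoping fails in general, I would fall back on interpreting the backtracking as returning to the state from which the failed branch was drawn, and compare with the global rejection sampler via a coupling argument, noting where the stated assumption is needed.
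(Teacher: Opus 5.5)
The step that fails is the one you flag as the main obstacle, and the factorization $\mathbb{I}[G(o)=1]=\prod_n \mathbb{I}[G(\tilde y_{\le n})=1]$ does not resolve it. That identity is true; under \Cref{assump1} it is immediate, since the $n=N$ factor is $G(o)$ itself. But it only concerns the numerators. By your own rejection-sampling computation, the accepted prefix after $n$ claims has law $\prod_{m\le n} p(\tilde y_m\mid \tilde y_{<m})\,\mathbb{I}[G(\tilde y_{\le m})=1]/Z(\tilde y_{<m})$. The product of local normalizers depends on the path, so your induction ``divided by a constant'' breaks. Redrawing only the offending suffix is \emph{not} equivalent to restarting. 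Memorylessness of fresh draws does not help, because the kept prefix was selected without weighting by its completion probability $W$.

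Here is a concrete counterexample. Let the first claim be $a$ or $b$ with probability $1/2$ each, both passing $G$. After $a$, the only continuation $e$ passes. After $b$, the continuations $c$ and $d$ have probability $1/2$ each, with $G([b,d])=0$. \Cref{assump1} holds, and $p_G(a,e)=2/3$, $p_G(b,c)=1/3$. Yet the algorithm outputs each of $(a,e)$ and $(b,c)$ with probability $1/2$.

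Now suppose both continuations of $b$ were rejected. A desired output would still exist, but the algorithm would loop forever with probability $1/2$. So your almost-sure termination claim also fails: the algorithm checks $G$, not viability, and can commit to a non-viable prefix.

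Hence no coupling or reinterpretation will close the gap, because the two laws genuinely differ. The statement needs an extra hypothesis, which is essentially your telescoping identity. For example, it suffices that for every accepted prefix $s$, the completion probability $W([s,\tilde y])$ is the same for all accepted next claims $\tilde y$; in particular, there are no accepted dead ends.

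The paper's proof does not supply the missing idea either. In its ``invariability'' step, take $u$ as the rejection probability and $v$ as the probability of drawing $\tilde y_n$. The paper factors the path probability so that the acceptance event is counted twice: it writes the probability of $k$ rejections followed by $\tilde y_n$ as $u^k(1-u)v$, which sums to $v$. The correct term is $u^k v$, which sums to $v/(1-u)$, and $1-u$ is precisely your local normalizer $Z(\tilde y_{<n})$. The global factor $1/\alpha$ in the paper's formula for $p_{\text{ccd}}$ is then inserted rather than derived. Without it, the masses $\prod_n v_n$ would sum to the paper's normalizer $\alpha$, not to $1$.

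Your diagnosis of where the difficulty lies is correct and sharper than the paper's argument. The conclusion you are aiming for is simply not true under the stated assumptions.
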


\begin{remark}
    This theorem establishes that the claim-based decoding streaming algorithm produces outputs that exactly match the restricted output distribution $p_G$. The key condition for this result is the existence of at least one feasible sequence that satisfies the guardrail $G$. The theorem is significant because it ensures that the decoding algorithm does not introduce biases or distortions beyond those imposed by the guardrail, thereby preserving the original probability structure of the restricted output space.
\end{remark}

\section{Evaluation}

\subsection{Evaluation setup}

\textbf{Dataset \& Models.} As AdvBench \cite{zou2023universal} and JailbreakBench \cite{chao2024jailbreakbench} are widely used for evaluating LLM safety \cite{liu2023autodan, chao2023jailbreaking, mehrotra2023tree}, we adopt it as our primary evaluation dataset.

For text generation, we consider LLMs \texttt{Llama-3.1-8B}, and \texttt{Llama-3.1-13B} as inference models. Additionally, we append adversarial suffixes to the user query to jailbreak the model, creating a more challenging safety evaluation scenario. These adversarial suffixes are optimized using the GCG attack \cite{zou2023universal} on the corresponding inference model as target models.

\textbf{Metrics.} Without specification, we employ \texttt{LlamaGuard3-8B} \cite{dubey2024llama} as the guardrail model  $G$ and use the unsafety probability predicted by it as the safety risk function $R_G$. Additionally, we also consider \texttt{ShieldGemma-9B} as guardrail models for guardrail comparisons.

\textbf{Baselines.} We consider four baselines for safe generation: 
(1) {Vanilla generation}, using a temperature of $1.0$;  
(2) {Self-reminder safety prompt} ({Reminder}) \cite{xie2023defending}, which incorporates a system safety prompt and an additional reminder prompt after the user query to encourage safer generation;  
(3) {Safety-Opt} \cite{zheng2024prompt}, which optimizes a soft safety prompt prefix to guide the model toward safer response patterns;  
(4) {Best-of-N} \cite{sun2024fast}, which selects the safest response from a set of 10 generated outputs;
and (5) SafeDecoding (SafeDec) \cite{xu2024safedecoding}, which uses a  safety-aware contrastive decoding strategy for LLMs to generate helpful and harmless responses to user queries.

\begin{figure}[t]
    \centering
    \begin{minipage}[b]{0.495\linewidth}
        \centering
        \includegraphics[width=\linewidth]{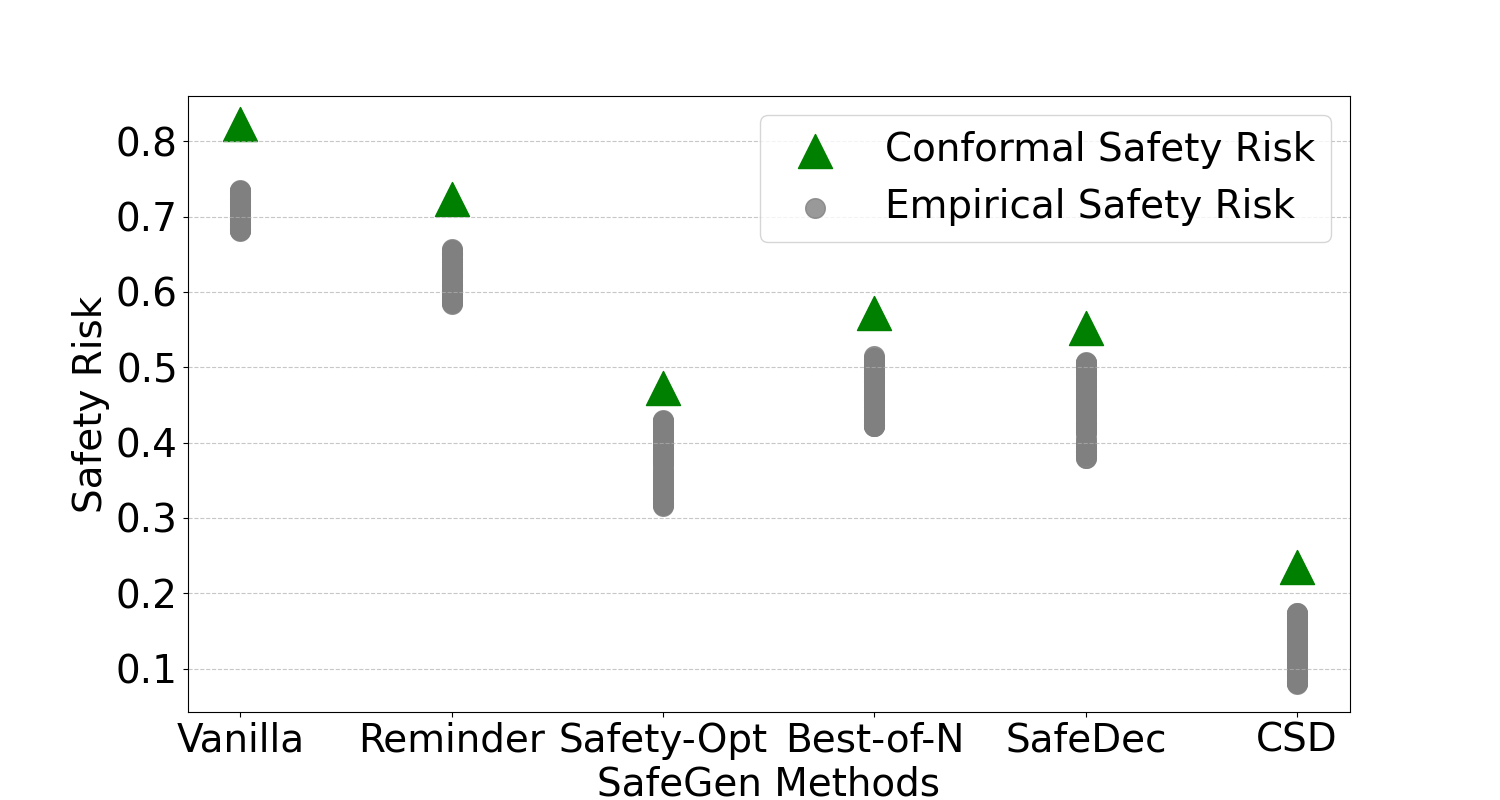}
        \caption*{(a) AdvBench – Llama-3.1-8B}
    \end{minipage}
    \hfill
    \begin{minipage}[b]{0.495\linewidth}
        \centering
        \includegraphics[width=\linewidth]{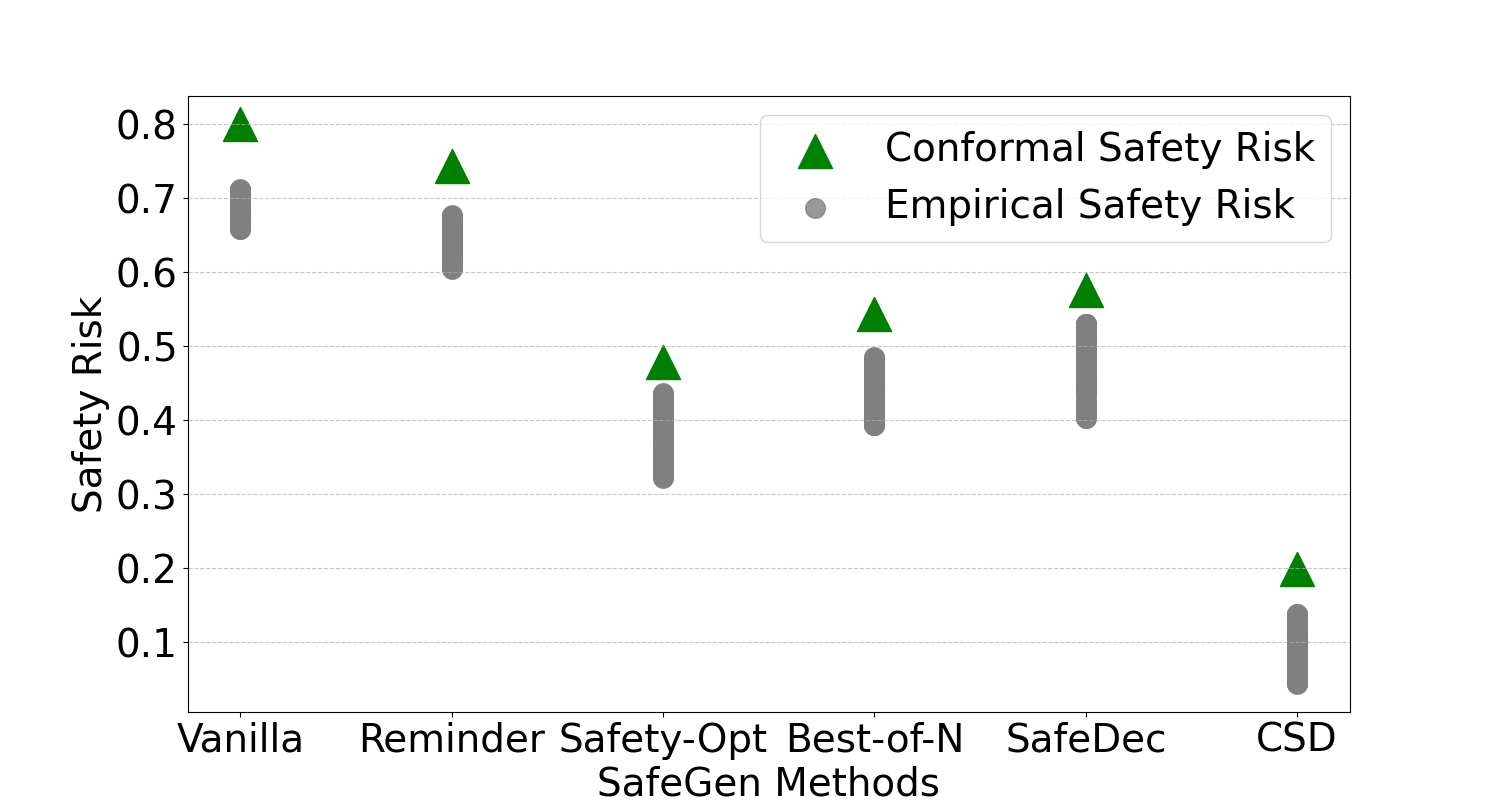}
        \caption*{(b) AdvBench – Llama-3.1-13B}
    \end{minipage}

    \begin{minipage}[b]{0.495\linewidth}
        \centering
        \includegraphics[width=\linewidth]{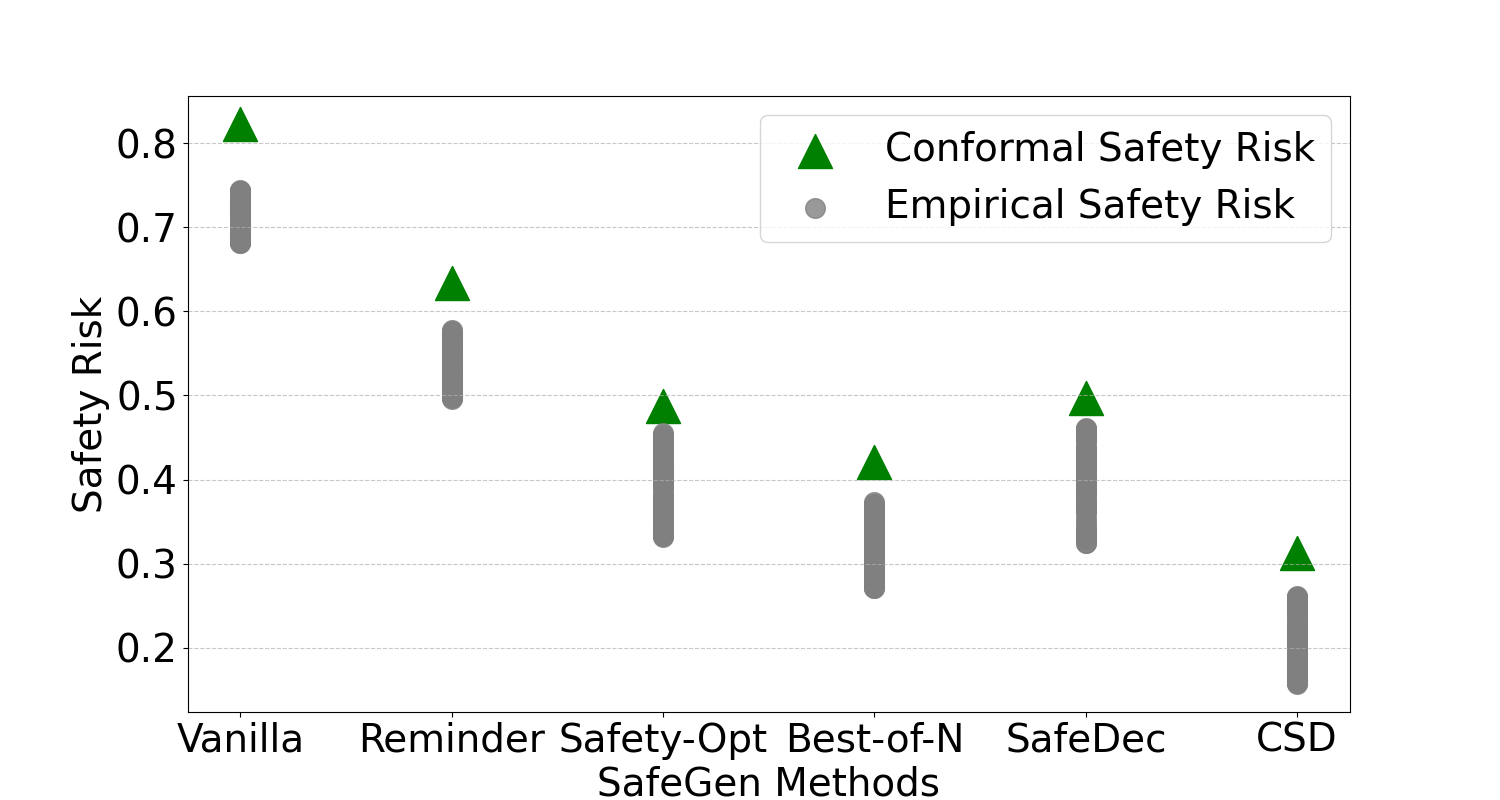}
        \caption*{(c) JailbreakBench – Llama-3.1-8B}
    \end{minipage}
    \hfill
    \begin{minipage}[b]{0.495\linewidth}
        \centering
        \includegraphics[width=\linewidth]{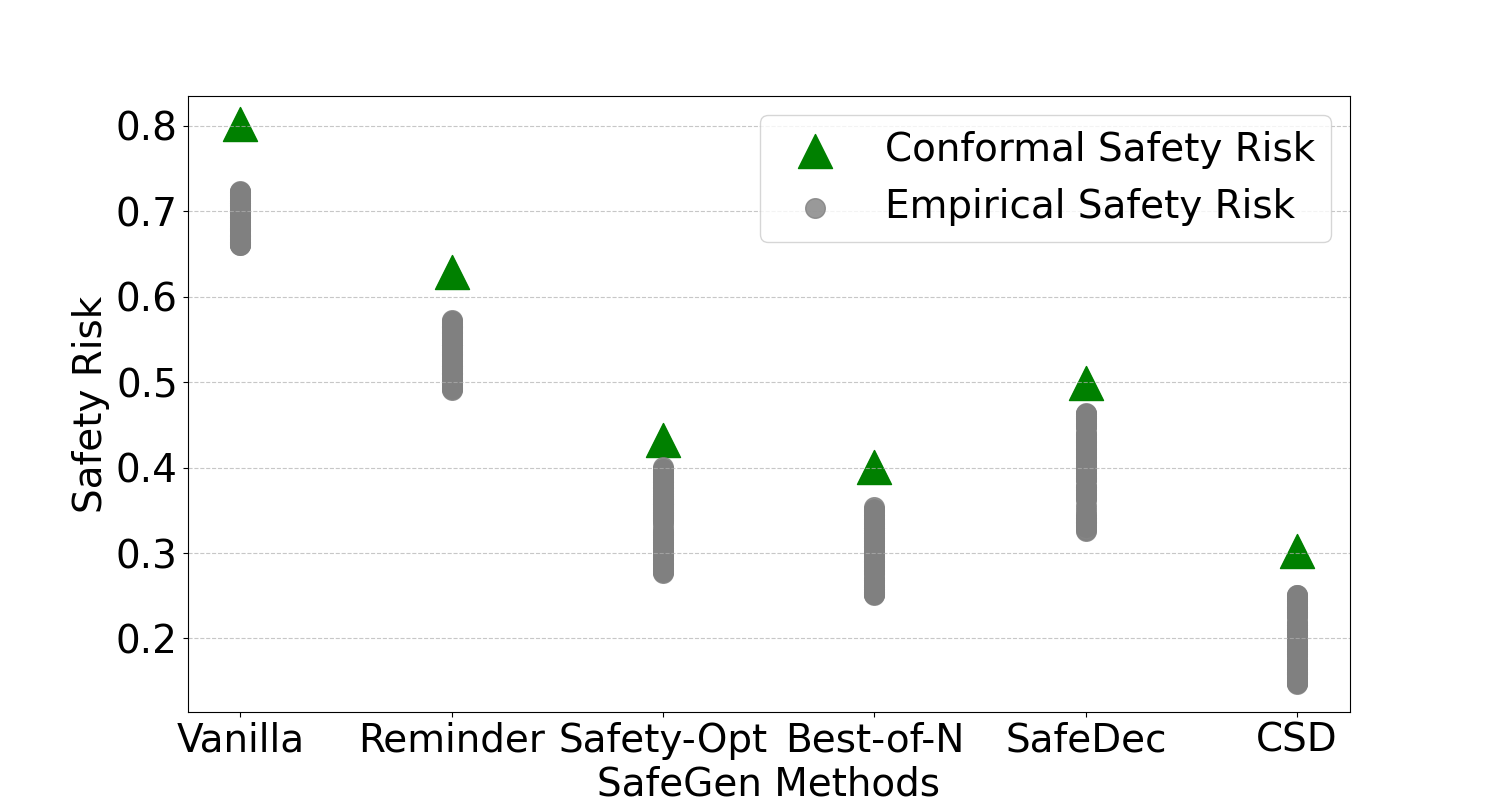}
        \caption*{(d) JailbreakBench – Llama-3.1-13B}
    \end{minipage}
    
    \caption{Evaluation of conformal safety risk (upper bound from \Cref{t1}) and empirical safety risk (mean risk on sampled test sets) across two benchmarks—AdvBench and JailbreakBench—using Llama-3.1 models (8B and 13B) with ShieldGemma-9B as the guardrail. Results show that (1) conformal risk bounds are valid and tight; and (2) our CSD method consistently achieves the lowest safety risk. Note: Grey bars result from overlapping dots.}
    \label{fig:res_combined}
\end{figure}

\subsection{Safety risk upper bound certification}

\Cref{fig:res_combined} presents a comparative evaluation of conformal safety risk (theoretical upper bound) and empirical safety risk (mean observed risk) across six SafeGen methods on two safety benchmarks, AdvBench and JailbreakBench, using Llama-3.1 models (8B and 13B) with ShieldGemma-9B as the guardrail. Across all settings, we observe that CSD consistently achieves the lowest safety risks—both conformal and empirical—indicating its superior capability in reducing unsafe generations under formal guarantees. The conformal risk bounds are generally tight, validating the reliability of the theoretical risk estimation across different methods and model sizes. Notably, methods such as SafeDec and Safety-Opt also reduce risk substantially compared to the vanilla baseline, but fail to match the low-risk performance of CSD. The trend persists across both model scales and benchmarks, underscoring the robustness of CSD’s improvements in safety under adversarial and jailbreak-style attacks.

In addition to method-wise comparisons, the results also reveal consistent trends across model sizes. Specifically, the larger model (Llama-3.1-13B) does not uniformly outperform the smaller 8B variant in safety risk reduction. In some cases—such as under Reminder and Best-of-N—the 13B model exhibits slightly higher safety risk than the 8B model, suggesting that scaling up model size alone does not guarantee improved safety under adversarial or conformal evaluation. This highlights the importance of method design (e.g., CSD) over sheer model capacity in achieving reliable and provably safe generations.

\begin{figure}[t]
    \centering
    \begin{minipage}[b]{0.495\linewidth}
        \centering
        \includegraphics[width=\linewidth]{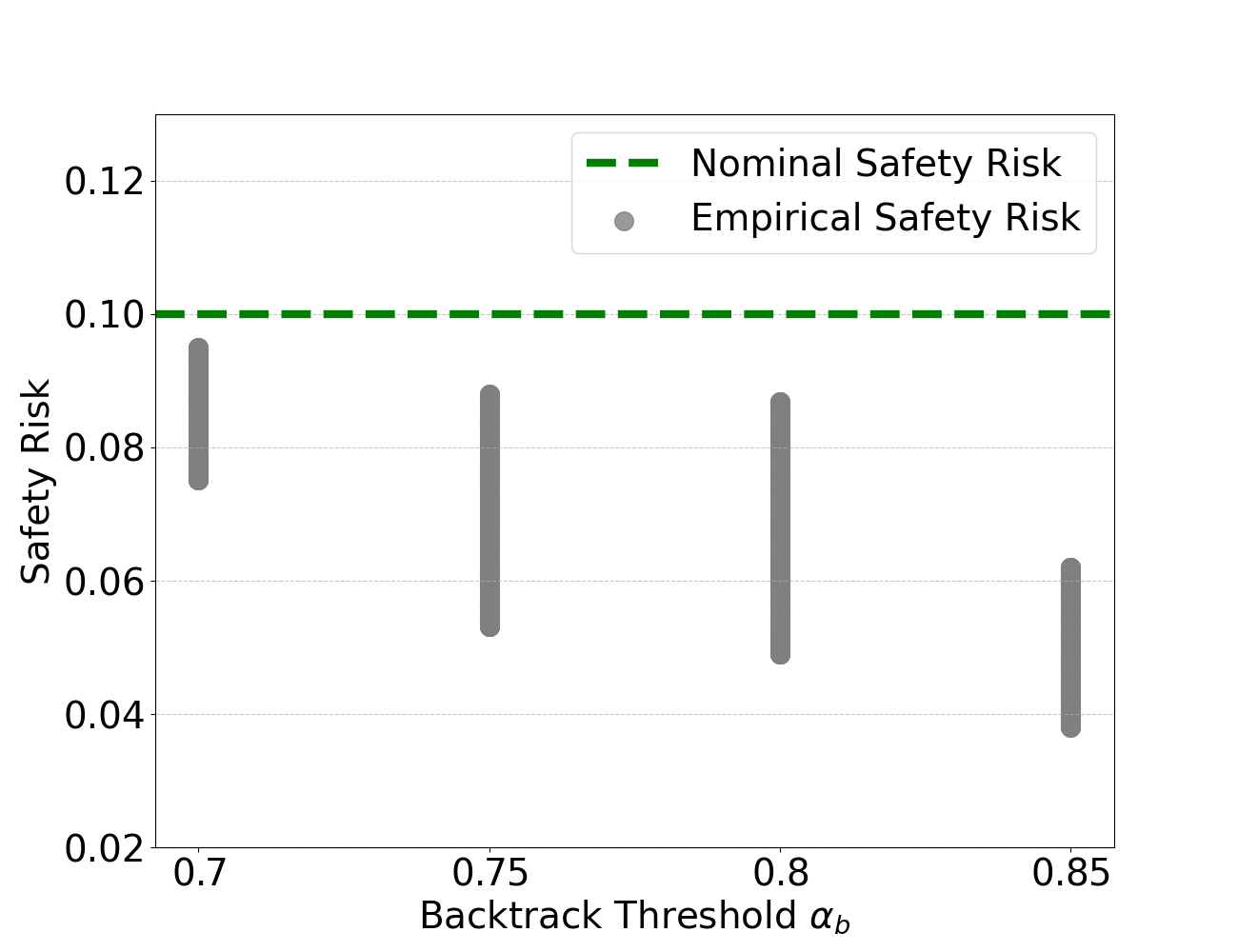}
        \caption*{(a) AdvBench – Llama-3.1-8B}
    \end{minipage}
    \hfill
    \begin{minipage}[b]{0.495\linewidth}
        \centering
        \includegraphics[width=\linewidth]{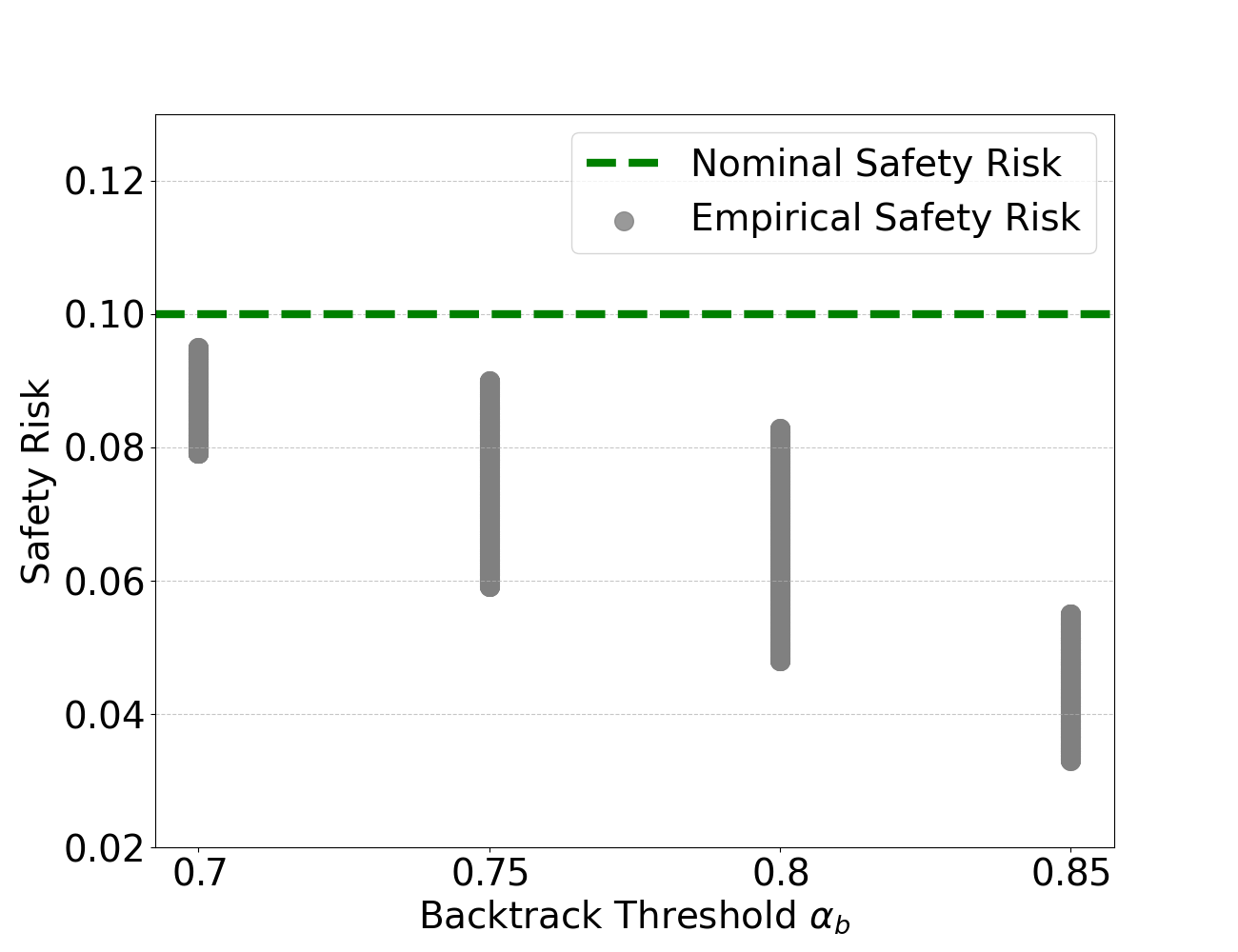}
        \caption*{(b) AdvBench – Llama-3.1-13B}
    \end{minipage}
    \begin{minipage}[b]{0.495\linewidth}
        \centering
        \includegraphics[width=\linewidth]{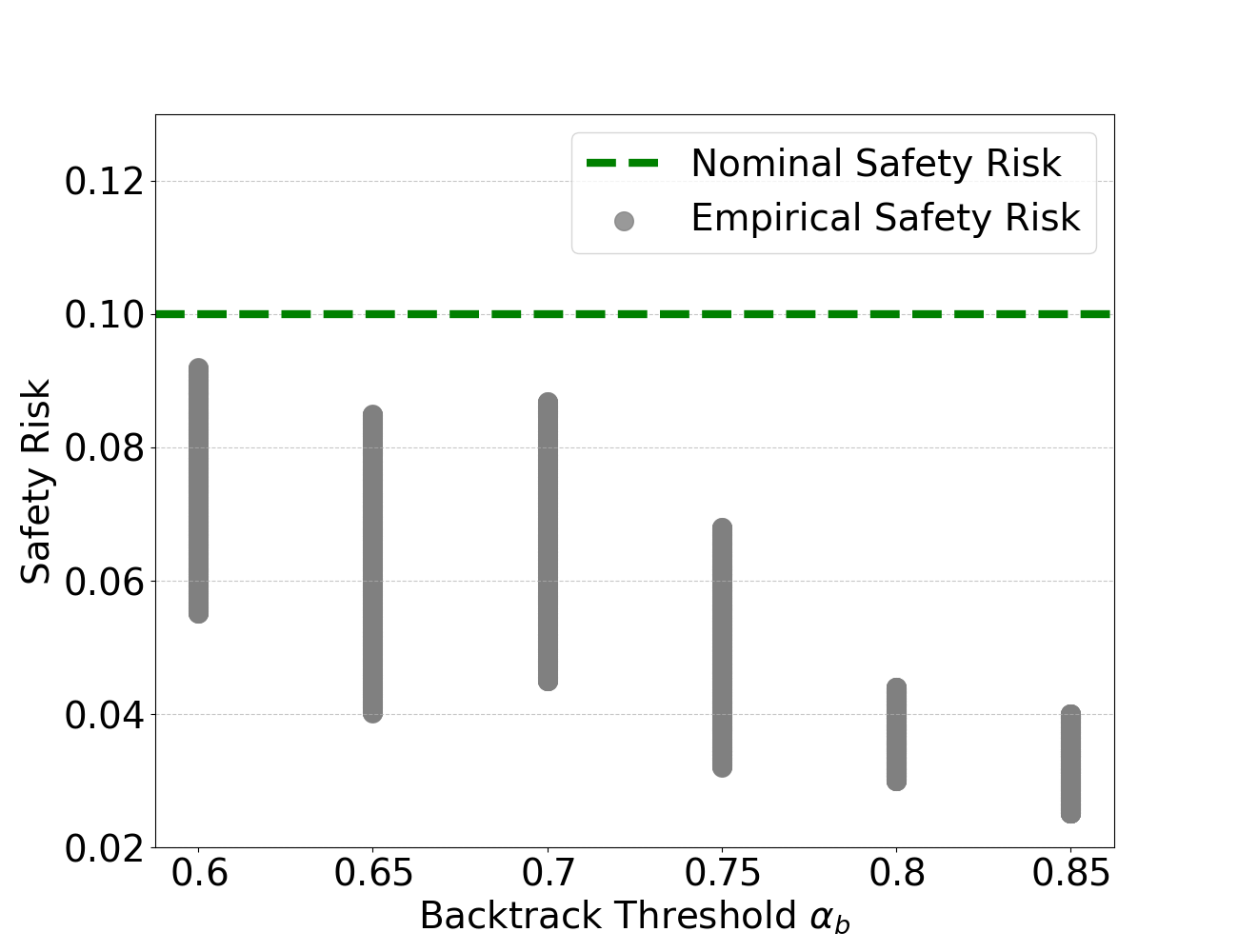}
        \caption*{(c) JailbreakBench – Llama-3.1-8B}
    \end{minipage}
    \hfill
    \begin{minipage}[b]{0.495\linewidth}
        \centering
        \includegraphics[width=\linewidth]{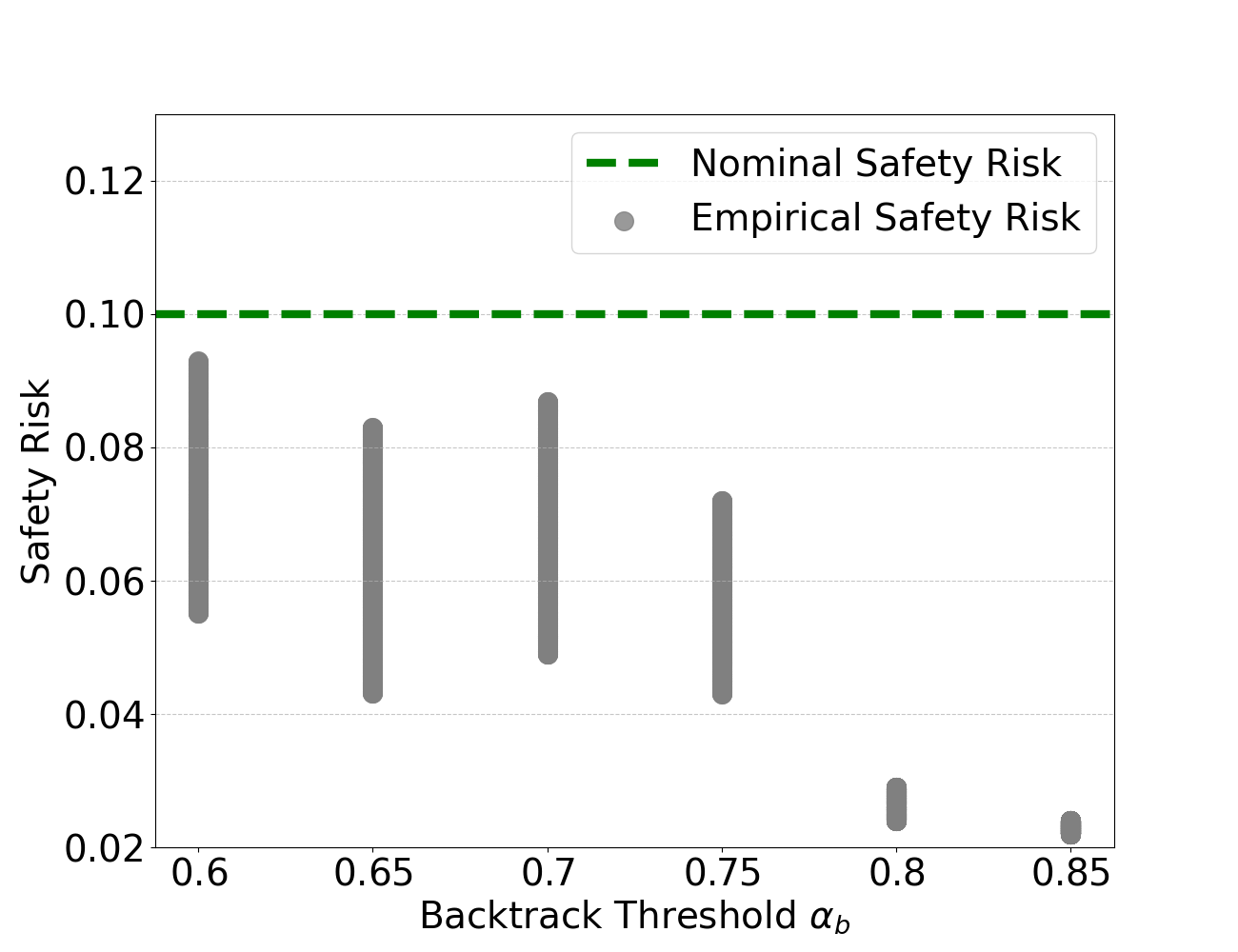}
        \caption*{(d) JailbreakBench – Llama-3.1-13B}
    \end{minipage}
    
    \caption{Evaluation of SafeGen configurations with nominal safety risk $0.10$ (\Cref{t2}). Valid configurations remain below the nominal risk, and larger backtrack thresholds $\alpha_b$ yield lower and more stable risks. Note: The grey bar results from overlapping grey dots.}
    \label{fig:res2_combined}
\end{figure}

\subsection{SafeGen configuration certification for nominal
safety risk}


Figure~\ref{fig:res2_combined} illustrates the validity of SafeGen configurations as certified by \Cref{t2}, and investigates how the backtrack threshold $\alpha_b$ affects the empirical safety risk across different benchmarks and model sizes. The dashed green line indicates the nominal safety risk bound (set to 0.10), while the grey markers represent empirical safety risks under varying values of $\alpha_b$. We highlight three main observations:
\textbf{(1) Certified safety guarantees hold across all settings.} For every evaluated configuration, the empirical safety risks remain strictly below the nominal risk threshold, confirming the high-probability validity guaranteed by \Cref{t2}. This supports the theoretical soundness of the certification framework across both AdvBench and JailbreakBench.
\textbf{(2) Higher $\alpha_b$ leads to lower safety risk.} Across all plots, increasing the backtrack threshold $\alpha_b$ consistently reduces empirical safety risk. This trend is consistent for both Llama-3.1-8B and Llama-3.1-13B, demonstrating that SafeGen with a higher $\alpha_b$ allows more cautious decoding behavior and safer outputs.
\textbf{(3) Risk stability improves with larger $\alpha_b$.} We also observe a reduction in the variability of empirical safety risk as $\alpha_b$ increases. This indicates that larger thresholds not only yield safer generations but also improve the reliability and consistency of safety performance across different test samples.

These results demonstrate that SafeGen provides strong empirical and theoretical safety guarantees, with all evaluated configurations maintaining risks below the specified nominal bound. Adjusting the backtrack threshold $\alpha_b$ is an effective mechanism to improve both the magnitude and stability of safety risk, and can be tuned independently of model scale. Together, these findings underscore the importance of principled decoding strategies—such as SafeGen—for deploying LLMs in safety-critical applications.



\vspace{-0.5em}

\subsection{Safety vs. decoding efficiency}

\begin{wrapfigure}{r}{0.37\textwidth}
    \centering
    \vspace{-10pt}
    \includegraphics[width=\linewidth]{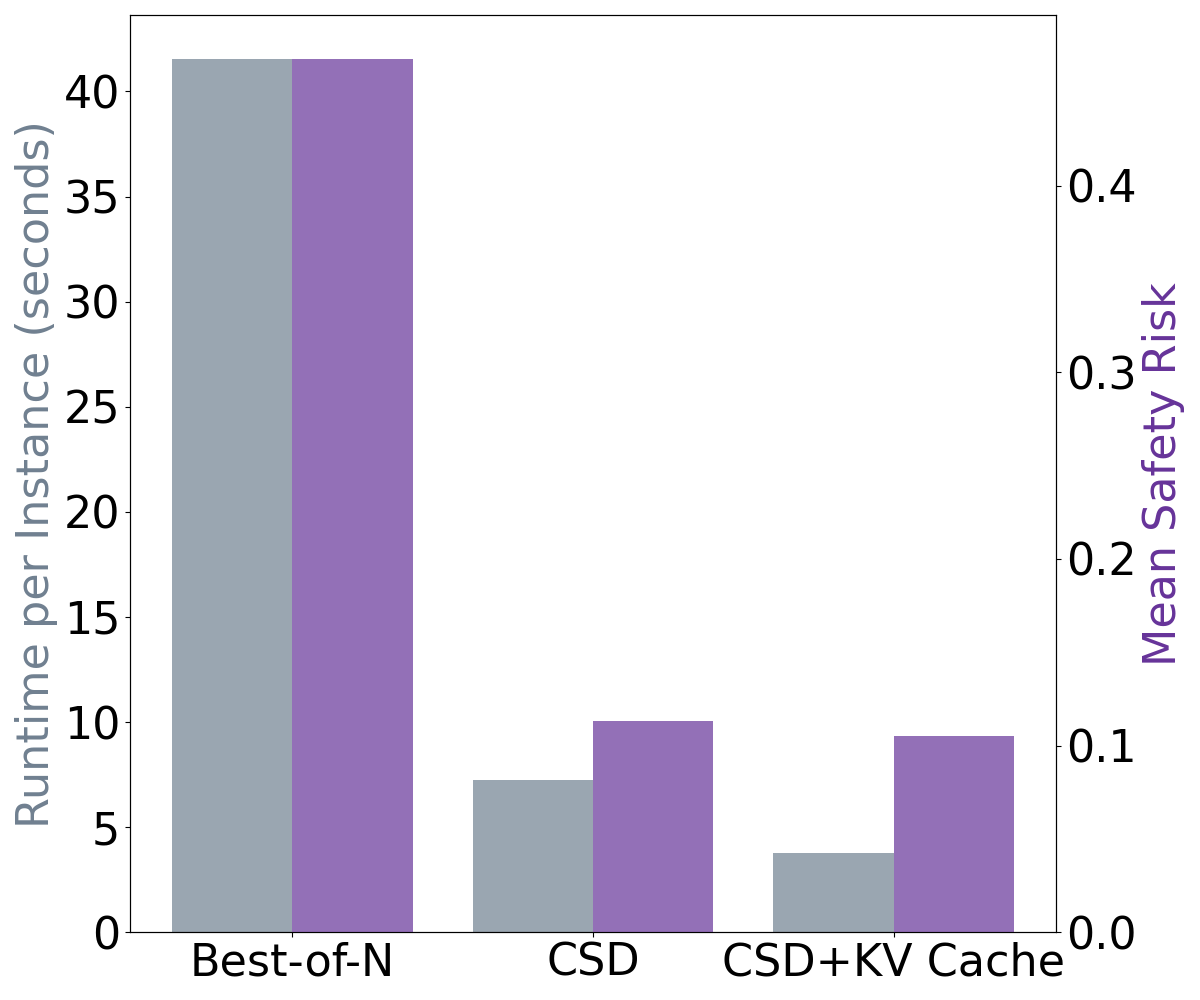}
    \caption{\small Evaluation of decoding runtime and mean safety risk for the strongest baseline Best-of-N, our proposed SafeGen protocol CSD, and CSD with KV cache. CSD achieves a better efficiency-safety tradeoff than Best-of-N. KV cache preserves safety while significantly reducing decoding time.}
    \label{fig:res3}
    \vspace{-10pt}
\end{wrapfigure}
Figure~\ref{fig:res3} compares the \textbf{runtime per instance} (grey bars) and the \textbf{mean safety risk} (purple bars) for different SafeGen methods: \texttt{Best-of-N}, \texttt{CSD}, and \texttt{CSD+KV Cache}. The results provide insights into the trade-offs between computational efficiency and safety performance.
We have the following observations:
(1) {Best-of-N} incurs the highest runtime, exceeding 40 seconds per instance, while also exhibiting the highest mean safety risk. This indicates that generating multiple completions and selecting the safest one is computationally expensive without significantly improving safety.
(2) {CSD} achieves a substantial reduction in runtime (approximately 7 seconds per instance) while also lowering the mean safety risk compared to Best-of-N. This demonstrates the efficiency of the CSD method in balancing safety and speed.
(3) {CSD+KV Cache} further reduces the runtime (below 5 seconds per instance) while maintaining a similar mean safety risk to CSD. This highlights the effectiveness of KV caching in accelerating SafeGen methods without compromising safety.

The results indicate that \texttt{CSD+KV Cache} provides the best trade-off between efficiency and safety, significantly reducing computational cost while maintaining low safety risk. \texttt{Best-of-N}, despite being a simple approach, is highly inefficient, making it impractical for real-time applications. These findings emphasize the importance of optimizing SafeGen methods for both speed and reliability.

\vspace{-0.5em}
\section{Conclusion}
\vspace{-0.5em}

The rapid adoption of open-source LLMs underscores the urgent need for rigorous safety guarantees in their generations, particularly in high-stakes applications. Existing empirical defenses, including RLHF, guardrails, and decoding-time interventions, lack formal assurances, limiting their reliability against adversarial exploits. To address this gap, we introduced \name, a model-agnostic certification framework that provides theoretical risk bounds and enables the enforcement of provable safety constraints. Complemented by our novel CSD decoding algorithm, \name ensures both safety and fluency in generated text. Empirical evaluations confirm the efficacy of our approach, demonstrating its potential as a robust foundation for the deployment of provably safe LLMs.

\section*{Acknowledgements}
This work is partially supported by DARPA Cooperative Agreement HR00112520013, the National Science Foundation under grant No. 1910100, No. 2046726, NSF AI Institute ACTION No. IIS-2229876, DARPA TIAMAT No. 80321, the National Aeronautics and Space Administration (NASA) under grant No. 80NSSC20M0229, ARL Grant W911NF-23-2-0137, Alfred P. Sloan Fellowship, the research grant from eBay, AI Safety Fund, Virtue AI, and Schmidt Science.

\bibliography{ref}
\bibliographystyle{plain}

\appendix
\onecolumn
\section{Omitted proofs}

\label{app:proof}

\subsection{Proof of \Cref{t1}}

\begin{proof}[Proof of \Cref{t1}]
    The proof sketch follows \cite{angelopoulos2021learn}. 
    Since the risk function $R(\cdot, \cdot)$ is upper bounded by $1$, we can apply a tighter version of Hoeffding's inequality \cite{hoeffding1994probability} for $\hat{\alpha}>\mathbb{E}[R_G(x,P_\alpha(x))]$:
    \begin{equation}
    \label{eq;hoeff_1}
        \sP\left[ R_G(x,P_\alpha(x)) \ge \hat{\alpha} \right] \le \exp\left\{ -N_{\text{cal}}h(\hat{R}(\hat{\gD}_{\text{cal}}),\hat{\alpha}) \right\}
    \end{equation}
    Also, applying Bentkus inequality \cite{bentkus2004hoeffding}, we have:
    \begin{equation}
    \label{eq:bentk_2}
        \sP\left[ R_G(x,P_\alpha(x)) \ge \hat{\alpha} \right] \le e\sP\left[\text{Bin}(N_{\text{cal}}, \hat{\alpha}) \le \left\lceil N_{\text{cal}} \hat{R}(\hat{\gD}_{\text{cal}}) \right\rceil \right]
    \end{equation}
    Combining \Cref{eq;hoeff_1,eq:bentk_2}, we have:
    \begin{equation}
        \sP\left[ R_G(x,P_\alpha(x)) \ge \hat{\alpha} \right] \le  \min\left( \exp\left\{-N_{\text{cal}} h\left(\hat{R}(\hat{\gD}_{\text{cal}}), \hat{\alpha}) \right) \right\}, e\sP\left[\text{Bin}(N_{\text{cal}}, \hat{\alpha}) \le \left\lceil N_{\text{cal}} \hat{R}(\hat{\gD}_{\text{cal}}) \right\rceil \right] \right)
    \end{equation}
    Or equivalently, given uncertainty $1-\delta$, we have:
    \begin{equation}
        \delta = \min\left( \exp\left\{-N_{\text{cal}} h\left(\hat{R}(\hat{\gD}_{\text{cal}}), \hat{\alpha}) \right) \right\}, e\sP\left[\text{Bin}(N_{\text{cal}}, \hat{\alpha}) \le \left\lceil N_{\text{cal}} \hat{R}(\hat{\gD}_{\text{cal}}) \right\rceil \right] \right),
    \end{equation}
    which leads to the following by formulating an inverse function:
    \begin{equation}
        \hat{\alpha} = \min \left\{ h^{-1}\left(\dfrac{\ln(1/\delta)}{N_{\text{cal}}};\hat{R}(\hat{\gD}_{\text{cal}})\right),\Phi^{-1}_{\text{bin}}\left(\dfrac{\delta}{e};N_{\text{cal}},\hat{R}(\hat{\gD}_{\text{cal}})\right) \right\}
    \end{equation}
\end{proof}

\subsection{Proof of \Cref{t2}}

\begin{proof}[Proof of \Cref{t2}]
    The proof follows \cite{holm1979simple}. 
    We consider $|\Lambda|$ independent hypothesis test corresponding to the $|\Lambda|$ Null hypothesis. By the Bonferroni method, each test is performed at a significance level of $\dfrac{\delta}{|\Lambda|}$.
    Therefore, The probability of not making a Type I error in a single test is $1-\dfrac{\delta}{|\Lambda|}$.
    The probability of making no Type I error in all $|\Lambda|$ tests is $(1-\dfrac{\delta}{|\Lambda|})^{|\Lambda|}$.
     The probability of making at least one Type I error (i.e., FWER) is the complement of making no Type I errors, which is $1-(1-\dfrac{\delta}{|\Lambda|})^{|\Lambda|}\le \delta$.
     Therefore, we prove that the familywise error rate is $\delta$ for Bonferroni correction. 
     Thus, going back to the risk guarantee, we conclude the proof.
\end{proof}

\subsection{Proof of \Cref{t3}}
\begin{proof}[Proof of \Cref{t3}]
    Let the output distribution by \Cref{alg:alg1} be $p_{\text{ccd}}$.
    To show the result, we need to show that for any output sequence $y \in \gY$, $p_{\text{ccd}}(y) = p_G(y)$. 
    
    We consider the following scenarios respectively: (1) this is an invalid output by the guardrail model: $G(y)=0$; (2) this a valid output: $G(y)=1$.

    \textbf{Scenario (1)}: $G(y)=0$. According to CCD algorithm, such an undesired claim would always be associated with $1.0$ backtrack probability in Line 4 of \Cref{alg:alg1}, and thus we have $p_{\text{ccd}}(y)=0$. On the other hand, according to \Cref{eq:pg}, we always have $p_G(y)=0$ when $G(y)=0$. Therefore, we have $p_{\text{ccd}}(y)=p_G(y)$ when $G(y)=0$.

    \textbf{Scenario (2)}: $G(y)=1$. In this scenario, we basically would like to validate that the backtrack mechanism in Lines 7-9 of \Cref{alg:alg1} does not distort the distribution.

    \textbf{Part (a)}: To show that, we first validate that the support of output distribution by CCD is identical to that of the restricted output distribution. 
    
    For any valid support in the restricted output distribution $y=[\tilde{y}_1,\tilde{y}_2,..,\tilde{y}_N]$, there always exists an unrisky claim decoding path.
    We consider this simple claim decoding path $\tilde{y}_1,\tilde{y}_2,..,\tilde{y}_N$. Since the complete output sequence $o$ is unrisky, then each claim prefix would also be unrisky (otherwise violating the claim risk cascade assumption).
    Therefore, this claim decoding path is a valid instantiation of CCD algorithm.

    Considering the reverse direction, for any valid support output $y$ by CCD algorithm, it is obvious that $G(y)=1$ given the acceptance of the last claim, and thus, this output is also a valid support of the restricted output distribution $p_G$.

    \textbf{Part (b)}: Then, we will prove that for any valid support $y=[\tilde{y}_1,\tilde{y}_2,..,\tilde{y}_N]$ such that $p_G(y)>0$ and $p_{\text{ccd}}(y)>0$, we have $p_G(y) = p_{\text{ccd}}(y)$.

    For ease of notation, we notate $\alpha=\sum_{y \in \gY} p(y) \sI[G(y)=1]$, which is a normalization factor in \Cref{eq:pg}.
    Then following the chain rule, we can compute $p_G(y)$ as:
    \begin{equation}
        p_G(y) = \dfrac{1}{\alpha} p(\tilde{y}_1)p(\tilde{y}_2|\tilde{y}_1) \cdots p(\tilde{y}_N|\tilde{y}_1,...,\tilde{y}_{N-1})
    \end{equation}
    Since $y$ is a desired output (i.e., $G(y)=1$), each claim prefix is also desired according to the risk cascade assumption in \Cref{assump1}.
    Then we have:
    \begin{equation}
    \small
        p_G(y) = \dfrac{1}{\alpha} p(\tilde{y}_1,G([\tilde{y}_1])=1)p(\tilde{y}_2,G([\tilde{y}_1,\tilde{y}_2])=1|\tilde{y}_1) \cdots p(\tilde{y}_N,G([\tilde{y}_1,...,\tilde{y}_N])=1|\tilde{y}_1,...,\tilde{y}_{N-1})
    \end{equation}
    Then we analyze the point mass of output $y$ by the CCD algorithm $p_{\text{ccd}}(y)$. We can formulate the decoding path of CCD algorithm that leads to output $y$ as $\tilde{\vy}_1, \tilde{y}_1,\tilde{\vy}_2, \tilde{y}_2,...,\tilde{\vy}_N, \tilde{y}_N$, where $\tilde{\vy}_n$ denotes a sequence of backtracked risky claims at time step $n$.

    Then we prove a key invariability property that
    \begin{equation}
        p(\tilde{y}_n,,G([\tilde{y}_1,...,\tilde{y}_n])=1|\tilde{y}_1,...,\tilde{y}_{n-1}) = \sum_{\tilde{\vy}_n} p(\tilde{\vy}_n, \tilde{y}_n,G([\tilde{y}_1,...,\tilde{y}_n])=1|\tilde{y}_1,...,\tilde{y}_{n-1})
    \end{equation}
    We can derive as the following:
    \begin{align}
        & \sum_{\tilde{\vy}_n} p(\tilde{\vy}_n, \tilde{y}_n,G([\tilde{y}_1,...,\tilde{y}_n])=1|\tilde{y}_1,...,\tilde{y}_{n-1}) \\
        =& \sum_{\tilde{\vy}_n} p(\tilde{\vy}_n, G([\tilde{y}_1,...,\tilde{y}_n])=1|\tilde{y}_1,...,\tilde{y}_{n-1}) p( \tilde{y}_n,G([\tilde{y}_1,...,\tilde{y}_n])=1|\tilde{y}_1,...,\tilde{y}_{n-1}, \tilde{\vy}_n) \\
        =& \sum_{\tilde{\vy}_n} p(\tilde{\vy}_n, G([\tilde{y}_1,...,\tilde{y}_n])=1|\tilde{y}_1,...,\tilde{y}_{n-1}) p( \tilde{y}_n,G([\tilde{y}_1,...,\tilde{y}_n])=1|\tilde{y}_1,...,\tilde{y}_{n-1})
    \end{align}
    For ease of notation, let $u=p(\tilde{y}_n, G([\tilde{y}_1,...,\tilde{y}_n])=0|\tilde{y}_1,...,\tilde{y}_{n-1})$ and $v=p( \tilde{y}_n,G([\tilde{y}_1,...,\tilde{y}_n])=1|\tilde{y}_1,...,\tilde{y}_{n-1})$, then we have:
    \begin{align}
        & \sum_{\tilde{\vy}_n} p(\tilde{\vy}_n, G([\tilde{y}_1,...,\tilde{y}_n])=1|\tilde{y}_1,...,\tilde{y}_{n-1}) p( \tilde{y}_n,G([\tilde{y}_1,...,\tilde{y}_n])=1|\tilde{y}_1,...,\tilde{y}_{n-1}) \\
        =& (1-u)v + u(1-u)v + u^2(1-u)v + u^3(1-u)v + \cdots \\
        =& v \\
        =& p( \tilde{y}_n,G([\tilde{y}_1,...,\tilde{y}_n])=1|\tilde{y}_1,...,\tilde{y}_{n-1})
    \end{align}
    Therefore, we have:
    \begin{align}
        p_{\text{ccd}}(y) &= \dfrac{1}{\alpha} \prod_{n=1}^N \sum_{\tilde{\vy}_n} p(\tilde{\vy}_n, \tilde{y}_n,G([\tilde{y}_1,...,\tilde{y}_n])=1|\tilde{y}_1,...,\tilde{y}_{n-1}) \\
        &= \dfrac{1}{\alpha} \prod_{n=1}^N p( \tilde{y}_n,G([\tilde{y}_1,...,\tilde{y}_n])=1|\tilde{y}_1,...,\tilde{y}_{n-1}) \\
        &= p_G(y)
    \end{align}

\end{proof}

\section{Discussion and Limitation}

The \name framework is able to safeguard LLMs' practical deployment and applications against ethical and societal concerns. 
Existing research shows that the responses of LLMs can be biased towards some demographic groups and not be aligned with human ethics.
With \name, we can define a bias/ethics risk function and control the generation risk below a desired level.
The risk guarantee provided by \name enhances the use of LLMs, addressing societal issues and regulatory infringements. We do not expect any negative societal consequences for our work.

While \name provides formal guarantees under mild assumptions, it currently relies on the availability of reliable external classifiers or risk estimators to define the safety risk function. The effectiveness of the certification heavily depends on the quality and coverage of these components, which may be incomplete or biased in practice. Additionally, \name operates in a black-box setting and does not account for internal model behaviors or hidden representations that might influence safety. Finally, although \name bounds risk on the evaluation distribution, distributional shifts at deployment time (e.g., novel user queries or emerging adversarial strategies) may compromise safety, highlighting the need for future extensions toward adaptive or online certification mechanisms.

\end{document}